\documentclass[10pt,letterpaper]{article}
\usepackage[letterpaper,margin=1in]{geometry}
\usepackage[T1]{fontenc}
\usepackage[utf8]{inputenc}
\usepackage{amsmath,amsthm,mathtools}
\usepackage{amssymb}
\usepackage{newtxtext,newtxmath}
\usepackage{graphicx,booktabs,array,tabularx}
\usepackage[table,dvipsnames]{xcolor}
\usepackage{enumitem,microtype}
\usepackage[font=small,labelfont=bf]{caption}
\usepackage{url}
\usepackage[colorlinks=true,citecolor=blue,linkcolor=blue,urlcolor=blue]{hyperref}
\usepackage{bookmark}
\newtheorem{theorem}{Theorem}
\newtheorem{proposition}{Proposition}

\theoremstyle{remark}
\newtheorem{remark}{Remark}

\theoremstyle{definition}
\newtheorem{definition}{Definition}

\newcommand{\Softplus}{\operatorname{Softplus}}
\newcommand{\arccosh}{\operatorname{arccosh}}

\newcommand{\SO}{\mathrm{SO}}
\newcommand{\so}{\mathfrak{so}}
\DeclareMathOperator{\diag}{diag}

\setlist{nosep,topsep=4pt}
\graphicspath{{figs/}}

\hypersetup{
    colorlinks=true,
    linkcolor=blue,
    citecolor=blue,
    urlcolor=blue
}

\title{\textbf{Minkowski Attractor Networks: \\Closed-Form Hyperbolic Flows for Visual Representations}}

\author{
    \texttt{\;}\\
    \textbf{Zhongping Ji}
}

\date{}

\begin{document}

\maketitle


\begin{abstract}
Geometric representation learning predominantly scaffolds representations onto flat Euclidean subspaces or compact product tori ($\mathbb{T}^K$). However, flat manifolds possess vanishing curvature and polynomial volume growth, inherently suffering from metric distortion when embedding multi-scale, tree-like visual hierarchies. While hyperbolic spaces ($\mathbb{H}^m$) circumvent this via constant negative curvature ($K<0$) and exponential volume expansion, prior hyperbolic deep architectures are hindered by computationally cumbersome Riemannian optimization, non-linear gyrovector calculus, and floating-point instabilities.

In this work, we introduce \textbf{Minkowski Attractor Networks (MAN)}, an operator-splitting-inspired framework that embeds representations within pseudo-Riemannian Minkowski spacetime ($\mathbb{R}^{1,m}$). By framing hyperbolic manifolds as quadric level sets, MAN resolves hyperbolic geometry by combining linear Lorentz group transport with non-linear cone lifting and closed-form radial rescaling, evaluating in a single forward pass without numerical ODE solvers or iterative retractions. 
We establish \textbf{MAN-2D} ($\mathbb{R}^{1,1} \to \mathbb{H}^1$) as our primary, high-throughput visual backbone, which maximizes channel factorization granularity into $D/2$ independent two-dimensional Minkowski blocks. We further formulate \textbf{MAN-4D} ($\mathbb{R}^{1,3} \to \mathbb{H}^3$) as a spacetime extension, leveraging a commuting Cartan-subalgebra parameterization of $\mathrm{SO}^+(1,3)$ to evaluate 4D Lorentz isometries via two commuting 2D planar maps without matrix-exponential overhead.
Across both compact ($\sim 1\text{M}$) and scaled ($\sim 2.1\text{M}$) regimes on CIFAR-100 without external pretraining, MAN models establish new Pareto frontiers: MAN-2D-1 achieves 81.03\% top-1 accuracy (1.02M parameters), scaling to 81.82\% in MAN-2D-2 (2.13M parameters). All variants comprehensively outperform flat torus baselines and heavyweight 23.7M ResNet-50, demonstrating that closed-form Minkowski spacetime dynamics provide a superior foundation for visual representations. Code will be made publicly available upon acceptance at: https://github.com/ParaMind2025/Ananke-CV.
\end{abstract}


\section{Introduction}
Modern deep visual architectures predominantly cascade discrete linear and non-linear operations within flat Euclidean vector spaces \cite{he2016deep,dosovitskiy2020image,liu2022convnet}. Continuous-depth models \cite{chen2018neural} interpret neural cascades as dynamical flow systems, while structured geometric backbones, such as Contractive Torus Attractor Networks (CTAN) \cite{ananke2026}, demonstrate that high-dimensional latent spaces can be structured into direct sums of 2D phase planes ($\bigoplus \mathbb{R}^2$), where transverse perturbations contract onto compact invariant tori $\mathbb{T}^K$.

Despite their parameter efficiency, product-torus models face an intrinsic geometric limitation: \textbf{flat tori have vanishing sectional curvature}. In flat spaces, the volume of a geodesic ball scales polynomially ($\operatorname{Vol}(B(r)) \propto r^m$), saturating at large scales on compact manifolds. In contrast, natural visual concepts exhibit multi-scale, tree-like taxonomies (e.g., compositional parts, fine-grained species, and hierarchical scene semantics). Embedding discrete branching trees into fixed low-dimensional flat spaces inevitably incurs metric distortion unless dimension scales with tree depth \cite{nickel2017poincare,ganea2018hyperbolic}.

Hyperbolic geometry ($\mathbb{H}^m$, for $m \ge 2$) offers an escape from this dimensional bottleneck. Endowed with constant negative curvature ($K < 0$), hyperbolic space exhibits exponential metric volume expansion ($\operatorname{Vol}(B(r)) \sim e^{(m-1)r/R}$), providing natural geometric capacity for continuous tree-like data. Nonetheless, mainstream computer vision has remained resistant to hyperbolic models due to three fundamental obstacles:
\begin{enumerate}
    \item \textbf{Non-Linear Gyrocalculus:} Standard operations (vector additions, residual shortcuts) are undefined in hyperbolic geometry, forcing prior works into computationally expensive gyrovector operations (M\"obius addition) or iterative Fr\'echet means \cite{ganea2018hyperbolic}.
    \item \textbf{Optimization and Numerical Challenges:} Directly optimizing manifold-constrained hyperbolic embeddings typically requires Riemannian gradient updates, whereas unconstrained network weights can be trained with standard Euclidean optimizers. Separately, finite-precision arithmetic limits the numerical reliability of hyperbolic representations in both the Poincar\'e and Lorentz models, potentially causing loss of accuracy or NaN failures \cite{mishne2023numerical}. These challenges motivate formulations with explicit geometric constraints and tractable computations.
    \item \textbf{GPU Hardware Incompatibility:} Modern Tensor Cores are specialized for Dense General Matrix Multiply (GEMM). Hyperbolic projections fragment computation into memory-bound elementwise operations.
\end{enumerate}

\paragraph{Why Minkowski Spacetime?} 
We overcome these obstacles by demonstrating that \textbf{embedding representations into pseudo-Riemannian Minkowski spacetime ($\mathbb{R}^{1,m}$) linearizes hyperbolic dynamics}. 
In Minkowski spacetime endowed with indefinite metric $\eta = \diag(1, -1, \dots, -1)$, the hyperbolic manifold is realized simply as a quadric level set:
\begin{equation}
\mathbb{H}_R^m = \left\{ X \in \mathbb{R}^{1,m} \ : \ X_0^2 - \sum_{i=1}^m X_i^2 = R^2, \ X_0 > 0 \right\}.
\end{equation}
Because the ambient spacetime is flat (pseudo-Euclidean), orientation-preserving hyperbolic isometries (Lorentz boosts and rotations) are represented as \textbf{standard linear matrix multiplications} belonging to the Lie group $\mathrm{SO}^+(1,m)$. 
Coupled with Lie--Trotter operator splitting, spatial feature mixing is handled by standard depthwise convolutions, while local reaction dynamics evolve via an exact, single-pass closed-form mapping with no time-discretization truncation error in exact arithmetic for the local reaction subproblem.

From a representation-learning perspective, the \textbf{manifold hypothesis} motivates representations organized near lower-dimensional geometric structures, while the risk of \textbf{representation collapse} highlights the need to retain informative variation within those structures. These competing imperatives motivate a principled separation between transverse regularization and tangential transport. Ananke \cite{ananke2026} implements this separation through structured latent manifold priors, combining logarithmic radial relaxation with conservative transport along their level sets. For the exact MAN reaction flow with shared, frozen parameters, the log-radius error decays exponentially, while Lorentz transport preserves pairwise hyperbolic distances between normalized states. This provides a conditional mechanism for regulating radial deviations without contracting existing intrinsic differences. CTAN and MAN instantiate this design using circular and hyperbolic geometries, respectively. These guarantees establish a rigorous local certificate for the conditional flows, providing the broader learned architecture with a structured geometric prior to attenuate off-manifold noise while safeguarding semantic expressivity.

\paragraph{Our Contributions.}
\begin{itemize}
    \item \textbf{The Minkowski Attractor Prior:} We establish an operator-splitting-inspired continuous-dynamical framework grounded in Minkowski spacetime. Transverse signed logarithmic dissipation contracts off-manifold perturbations, while tangential Lorentz Lie group flows preserve metric invariants.
    \item \textbf{MAN-2D as the Workhorse Visual Backbone:} We show that the 2D Minkowski formulation ($\mathbb{R}^{1,1} \to \mathbb{H}^1$) is an exceptional visual operator. Although $\mathbb{H}^1$ is intrinsically flat, its non-compact coordinates prevent periodic phase-locking, while partitioning channels into $D/2$ independent 2D blocks maximizes filter factorization granularity.
    \item \textbf{High-Dimensional Spacetime Extension (MAN-4D):} We generalize the framework to four-dimensional spacetime ($\mathbb{R}^{1,3} \to \mathbb{H}^3$). By exploiting the commuting Cartan-subalgebra parameterization $[\mathbf{K}_3, \mathbf{J}_{12}] = 0$, we decouple 4D Lorentz transport into parallel spatial rotations and longitudinal boosts, preserving the Lorentz quadratic form through a two-parameter commuting subgroup at reduced parameter overhead ($1.75D$ vs. $2.0D$).
    \item \textbf{Systematic Dimensional and Scaling Benchmark:} We evaluate 2D, 3D, and 4D variants across two pyramidal tiers on CIFAR-100. MAN models establish new performance frontiers: MAN-2D attains \textbf{81.03\%} (1.02M) and \textbf{81.82\%} (2.13M); MAN-4D attains \textbf{80.80\%} (0.97M) and \textbf{81.75\%} (2.01M), universally outperforming 2D flat tori (78.41\% / 80.54\%) and heavyweight 23.7M ResNet-50 (79.14\%).
\end{itemize}


\section{Related Work}
\paragraph{Continuous-Depth Representations.} Neural ODEs \cite{chen2018neural} formalized residual networks as continuous dynamical systems. Hamiltonian Neural Networks \cite{greydanus2019hamiltonian} and Symplectic ODE-Nets \cite{zhong2020symplectic} integrated conservative physical invariants. Operator splitting techniques, such as Lie--Trotter and Strang splitting \cite{strang1968construction}, separate spatial diffusion from local non-linear kinetics. CTAN \cite{ananke2026} operationalized this via 2D flat tori. MAN advances this continuous paradigm into non-Euclidean pseudo-Riemannian spacetimes.

\paragraph{Hyperbolic Deep Learning.} Hyperbolic geometry was introduced to deep learning via Poincar\'e embeddings for hierarchical structures \cite{nickel2017poincare} and generalized through gyrovector calculus \cite{ganea2018hyperbolic}. Hyperbolic graph networks \cite{chami2019hyperbolic} and continuous Riemannian normalizing flows \cite{mathieu2020riemannian} further demonstrated representation gains. Prior works have developed fully hyperbolic neural networks directly in the Lorentz model \cite{chen2021fully,bdeir2023fully}.
MAN instead introduces continuous reaction dynamics with closed-form logarithmic radial relaxation and low-dimensional structured Lorentz transport.


\section{Foundations: Minkowski Spacetime and the Hyperboloid Model}

\subsection{Pseudo-Riemannian Metric and Causal Cones}
Let $\mathbb{R}^{1,m}$ denote $(m+1)$-dimensional Minkowski spacetime. A state vector is partitioned into a timelike scalar $X_0 \in \mathbb{R}$ (hierarchy depth) and spacelike coordinates $\mathbf{X} = (X_1, \dots, X_m)^\top \in \mathbb{R}^m$ (branching directions):
\begin{equation}
X = \begin{pmatrix} X_0 \\ \mathbf{X} \end{pmatrix} \in \mathbb{R}^{1,m}.
\end{equation}
Spacetime is endowed with the diagonal Minkowski metric tensor $\eta = \diag(1, -1, \dots, -1)$. The pseudo-inner product is defined by:
\begin{equation}
\langle X, Y \rangle_\eta = X^\top \eta Y = X_0 Y_0 - \mathbf{X} \cdot \mathbf{Y}.
\end{equation}
The Minkowski squared pseudo-norm is $\|X\|_\eta^2 = \langle X, X \rangle_\eta = X_0^2 - \|\mathbf{X}\|^2$. A state vector is timelike if $\|X\|_\eta^2 > 0$, lightlike if $\|X\|_\eta^2 = 0$, and spacelike if $\|X\|_\eta^2 < 0$. The future timelike cone is defined as $\mathcal{C}^+ \coloneqq \{X \in \mathbb{R}^{1,m} : X_0 > 0, \langle X, X \rangle_\eta > 0\}$.

\subsection{The Lorentz Hyperboloid $\mathbb{H}_R^m$}
\begin{definition}[Hyperboloid Model]
The $m$-dimensional hyperbolic space of radius $R > 0$ is realized as the forward sheet of the two-sheeted hyperboloid in $\mathbb{R}^{1,m}$:
\begin{equation}
\mathbb{H}_R^m \coloneqq \left\{ X \in \mathcal{C}^+ \ : \ \langle X, X \rangle_\eta = R^2 \right\}.
\end{equation}
\end{definition}

\begin{proposition}[Induced Positive-Definite Metric]
Although $\eta$ is indefinite on $\mathbb{R}^{1,m}$, the negative restriction of $\eta$ to the tangent bundle of $\mathbb{H}_R^m$ induces a strictly positive-definite Riemannian metric $g = -\iota^* \eta$.
\end{proposition}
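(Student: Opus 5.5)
The plan is to identify the tangent space of $\mathbb{H}_R^m$ at a point $X$ as an $\eta$-orthogonal complement of $X$, and then to use a Cauchy--Schwarz argument to show that every nonzero vector in it is spacelike.

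First, $\mathbb{H}_R^m$ is the level set $F^{-1}(R^2)$ of $F(X)=\langle X,X\rangle_\eta$, intersected with the open set $\mathcal{C}^+$. Its differential is $dF_X(V)=2\langle X,V\rangle_\eta$. The Euclidean gradient of $F$ is $2\eta X$, which is nonzero because $X_0>0$, so $R^2$ is a regular value. Hence $\mathbb{H}_R^m$ is an embedded hypersurface, and
\[
T_X\mathbb{H}_R^m=\{V\in\mathbb{R}^{1,m}:\langle X,V\rangle_\eta=0\}.
\]
The tensor $g=-\iota^*\eta$ is then symmetric and smooth, because it is the restriction of a constant bilinear form. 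The only real content is positivity: for $V\neq 0$ with $\langle X,V\rangle_\eta=0$ we need $\langle V,V\rangle_\eta<0$.

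For positivity, the tangency condition gives $X_0V_0=\mathbf{X}\cdot\mathbf{V}$, so $V_0=\mathbf{X}\cdot\mathbf{V}/X_0$. Apply Cauchy--Schwarz and then $\|\mathbf{X}\|^2=X_0^2-R^2<X_0^2$:
\[
V_0^2\le\frac{\|\mathbf{X}\|^2\|\mathbf{V}\|^2}{X_0^2}
=\Bigl(1-\frac{R^2}{X_0^2}\Bigr)\|\mathbf{V}\|^2 .
\]
Consequently
\[
-\langle V,V\rangle_\eta=\|\mathbf{V}\|^2-V_0^2\ge\frac{R^2}{X_0^2}\|\mathbf{V}\|^2\ge 0 .
\]
Equality forces $\mathbf{V}=0$, and then $V_0=\mathbf{X}\cdot\mathbf{V}/X_0=0$, so $V=0$. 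This shows that $g_X$ is positive definite at every point. Smoothness in $X$ is automatic, so $g$ is a Riemannian metric.

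An alternative route is to use transitivity of $\mathrm{SO}^+(1,m)$, which preserves both $\eta$ and $\mathbb{H}_R^m$, to reduce to the base point $X=(R,0,\dots,0)$. There $T_X\mathbb{H}_R^m=\{V_0=0\}$ and $g$ is the Euclidean metric. That argument, however, requires proving transitivity, so the direct Cauchy--Schwarz bound is cleaner.

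The main obstacle is the strictness step, namely excluding nonzero null tangent vectors. It is handled by the strict inequality $\|\mathbf{X}\|<X_0$, which holds precisely because $R>0$.
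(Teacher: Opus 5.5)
Your proof is correct and follows the same route as the paper: identify $T_X\mathbb{H}_R^m$ as the $\eta$-orthogonal complement of $X$, then show every nonzero vector in it is spacelike because $X_0 > \|\mathbf{X}\|$. The paper simply asserts that last fact, whereas your Cauchy--Schwarz bound $-\langle V,V\rangle_\eta \ge (R^2/X_0^2)\|\mathbf{V}\|^2$ proves it, and your regular-value argument justifies the tangent-space description.
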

\begin{proof}
For any $X \in \mathbb{H}_R^m$, the tangent space is $T_X \mathbb{H}_R^m = \{V \in \mathbb{R}^{1,m} : \langle X, V \rangle_\eta = 0\}$. Because $X$ is strictly timelike ($X_0 > \|\mathbf{X}\|$), any non-zero vector $V$ orthogonal to $X$ under $\eta$ must be strictly spacelike ($\langle V, V \rangle_\eta < 0$). Thus, $g(V, V) \coloneqq -\langle V, V \rangle_\eta > 0$ for all non-zero $V \in T_X \mathbb{H}_R^m$.
\end{proof}

\begin{theorem}[Sectional Curvature and Volume Explosion]
For $m \ge 2$, the manifold $\mathbb{H}_R^m$ possesses constant negative sectional curvature $K = -1/R^2 < 0$. The volume of a geodesic ball of radius $r$ scales exponentially:
\begin{equation}
\operatorname{Vol}_m(B(r)) \sim \frac{S_{m-1} R^m}{(m-1)2^{m-1}} e^{\frac{(m-1)r}{R}} \quad (r \to \infty).
\end{equation}
For $m=1$ ($\mathbb{H}^1$), the intrinsic curvature tensor vanishes identically, and $\operatorname{Vol}_1(B(r)) = 2r$.
\end{theorem}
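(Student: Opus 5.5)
The plan is to use hyperbolic polar coordinates centred at the apex $P=(R,0,\dots,0)$, which turn both the curvature and the volume claims into one-variable computations. Every point of $\mathbb{H}_R^m$ can be written as
\[
X(r,\theta)=\bigl(R\cosh(r/R),\ R\sinh(r/R)\,\theta\bigr),\qquad r\ge 0,\ \theta\in S^{m-1}.
\]
This uses the identity $\cosh^2-\sinh^2=1$, and $X_0>0$ holds automatically. Here $r$ is the geodesic distance from $P$. To see this, note first that $\cosh(d/R)=\langle X,Y\rangle_\eta/R^2$ for points of $\mathbb{H}_R^m$. Second, the curve $r\mapsto X(r,\theta)$ is a unit-speed geodesic with respect to $g=-\iota^*\eta$ from the Induced Positive-Definite Metric proposition: its velocity is $(\sinh(r/R),\cosh(r/R)\theta)$, which has $g$-norm $1$. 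Third, it lies in the plane through the origin spanned by $P$ and $(0,\theta)$, and is therefore fixed by a reflection isometry.

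\textbf{Metric.} Next I pull back $g$ along this parametrization. The $\partial_r$ direction has unit length. Variations in $\theta$ give $(0,R\sinh(r/R)\,\delta\theta)$, which is $\eta$-orthogonal to $\partial_r X$ because $\theta\cdot\delta\theta=0$. This yields the warped-product form
\[
g=dr^2+R^2\sinh^2(r/R)\,g_{S^{m-1}}.
\]

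\textbf{Curvature ($m\ge2$).} I will derive this from the warped-product form rather than from Gauss--Codazzi, since the ambient metric is indefinite. For $g=dr^2+f(r)^2g_{S^{m-1}}$, the standard formulas give radial sectional curvatures $-f''/f$ and tangential curvatures $(1-f'^2)/f^2$. With $f=R\sinh(r/R)$ both equal $-1/R^2$. By homogeneity the value is the same at every point and every plane: $\mathrm{SO}^+(1,m)$ acts transitively on $\mathbb{H}_R^m$ by isometries, and the stabilizer $\mathrm{SO}(m)$ of $P$ acts transitively on 2-planes in $T_P$. Alternatively, the Gauss equation for a spacelike hypersurface with unit timelike normal $X/R$ and shape operator $\pm\tfrac1R\mathrm{Id}$ gives the same constant directly; the main thing to check there is the sign convention. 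I expect this bookkeeping of signs under the indefinite ambient metric to be the only real obstacle, which is why the intrinsic warped-product route is preferable.

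\textbf{Volume.} From the warped form,
\[
\operatorname{Vol}_m(B(r))=S_{m-1}\int_0^r R^{m-1}\sinh^{m-1}(s/R)\,ds.
\]
For large $s$, $\sinh^{m-1}(s/R)=2^{-(m-1)}e^{(m-1)s/R}\bigl(1+O(e^{-2s/R})\bigr)$. Integrating gives
\[
S_{m-1}R^{m-1}2^{-(m-1)}\cdot\frac{R}{m-1}\,e^{(m-1)r/R}\,(1+o(1)),
\]
which is exactly the stated asymptotic. The prefactor is legitimate for $m\ge2$ because $m-1\ge1$, so the integral is dominated by its upper endpoint.

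\textbf{Case $m=1$.} Here $\mathbb{H}_R^1$ is the curve $(R\cosh(r/R),R\sinh(r/R))$ with $r\in\mathbb{R}$, and it has unit speed. It is a 1-manifold, so its Riemann tensor vanishes identically: $\Lambda^2T$ is zero. The geodesic ball of radius $r$ is the interval $(-r,r)$ in arclength, so its length is $2r$. This is consistent with the general formula, since $S_0=2$ and $\int_0^r ds=r$.
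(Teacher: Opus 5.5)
The paper states this theorem without proof and treats it as a standard fact about the hyperboloid model, so there is no argument of the authors' to compare against. Your derivation is correct and is the standard one. You use geodesic polar coordinates about $P$, the warped form $g=dr^2+R^2\sinh^2(r/R)\,g_{S^{m-1}}$, the warped-product curvature formulas $-f''/f=(1-f'^2)/f^2=-1/R^2$, and the asymptotics of $S_{m-1}\int_0^r R^{m-1}\sinh^{m-1}(s/R)\,ds$. Your leading constant matches the stated one. Working intrinsically also lets you avoid the sign bookkeeping that the Gauss equation would need under the indefinite ambient metric.

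Three points could be tightened.

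\emph{Distance.} The identification of $r$ with geodesic distance follows most directly from the warped form itself. Since $g\ge dr^2$, every curve from $P$ to $X(r,\theta)$ has length at least $r$, and the radial curve attains it. This avoids relying on the paper's distance formula, which is stated as a definition, and also on your reflection argument. That argument is slightly off for $m\ge 3$: a single hyperplane reflection fixes a totally geodesic copy of $\mathbb{H}^{m-1}_R$, not just the curve. The isometry you need is $(X_0,\mathbf{X})\mapsto(X_0,\,2(\theta\cdot\mathbf{X})\theta-\mathbf{X})$, whose fixed set on the hyperboloid is exactly the radial curve.

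\emph{Centre of the ball.} Your coordinates are centred at $P$. Say explicitly that transitivity of $\mathrm{SO}^+(1,m)$ makes the ball volume independent of the centre.

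\emph{Curvature.} Your homogeneity step is valid, since isotropy at $P$ transfers to every point by conjugating the stabilizer. A quicker route: the curvature operator of such a warped product is diagonal in the basis $\partial_r\wedge e_i$, $e_i\wedge e_j$. Equal eigenvalues therefore give $K\equiv -1/R^2$ for $r>0$ directly, and at $P$ by continuity.
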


\begin{definition}[Intrinsic Geodesic Distance]
The geodesic distance between $X, Y \in \mathbb{H}_R^m$ is:
\begin{equation}
d_{\mathbb{H}}(X, Y) = R \arccosh\left( \frac{\langle X, Y \rangle_\eta}{R^2} \right).
\end{equation}
By the reversed Cauchy--Schwarz inequality for future-directed timelike vectors, $\langle X, Y \rangle_\eta \ge R^2$, guaranteeing that the argument to $\arccosh$ is strictly $\ge 1$.
\end{definition}


\section{Methodology: Minkowski Attractor Networks}

\subsection{Recap: The Product-Manifold Scaffolding Prior}
\label{subsec:scaffolding_prior}

Rather than presuming that high-dimensional sensory representations are globally homeomorphic to a single monolithic manifold, the foundational Ananke framework \cite{ananke2026} introduces the \textbf{Product-Manifold Scaffolding Prior}. In analogy to harmonic analysis where orthogonal circular functions serve as a universal basis to decompose complex signals, the latent representation space $\mathbb{R}^D$ is factorized into an orthogonal direct sum of low-dimensional elementary phase subspaces:
\begin{equation}
X(\mathbf{p}) = \bigoplus_{k=1}^M X_k(\mathbf{p}), \qquad X_k(\mathbf{p}) \in \mathbb{R}^{d_{\text{sub}}},
\label{eq:direct_sum_scaffold}
\end{equation}
where $X \in \mathbb{R}^{B \times D \times H_s \times W_s}$ denotes the visual feature tensor, $\mathbf{p} = (h, w)$ indexes the spatial lattice, and $D = M \cdot d_{\text{sub}}$. Here, the orthogonal direct sum serves as an adaptive geometric scaffold: each constituent subspace provides an elementary coordinate frame where transverse amplitude deviations and tangential semantic transport can be regulated independently. Across the spatial grid, these local fibers are conditioned on the input feature context, dynamically adapting the scaffold to the underlying data distribution.

Within each constituent subspace, representations evolve under a conditional reaction flow toward a prescribed target level-set manifold:
\begin{equation}
\dot{X}_k = F_k(X_k; \Theta_k), \qquad \mathcal{M}_\Theta = \prod_{k=1}^M \left\{ X_k \ : \ H_k(X_k; \Theta_k) = 0 \right\},
\label{eq:product_manifold_def}
\end{equation}
where $\Theta_k$ denotes the parameter collection computed once at the block input and held frozen during the local reaction step to ensure decoupled parallel planar flows, and $H_k$ specifies the energy function defining the invariant submanifold.

\paragraph{From Flat Toroids to Minkowski Spacetimes.}
In CTAN \cite{ananke2026}, this scaffolding was instantiated via canonical 2D Euclidean phase planes ($d_{\text{sub}} = 2$) with circular potential level sets $H_k(X_k) = \frac{1}{2}(\|X_k - c_k\|_2^2 - R_k^2) = 0$, whose Cartesian product forms a compact invariant torus $\mathcal{M}_\Theta \cong \mathbb{T}^K$. In this work, MAN advances this scaffolding prior by lifting the constituent fibers into pseudo-Riemannian Minkowski spacetimes ($d_{\text{sub}} \in \{2, 4\}$). Instead of compact circular orbits, representations evolve toward invariant hyperbolic quadrics $H_k(X_k) = \frac{1}{2}(\|X_k - c_k\|_\eta^2 - R_k^2) = 0$, preserving the decoupled solvability of the original scaffolding prior while unlocking non-compact, hierarchical geometric capacity.

\subsection{Operator-Split Reaction-Diffusion Flow}
Let $X \in \mathbb{R}^{B \times D \times H_s \times W_s}$ denote a feature tensor. We factorize the channels into an orthogonal direct sum of independent Minkowski blocks $\bigoplus_{k=1}^M \mathbb{R}^{1,m}$. Feature evolution follows an operator-splitting-inspired pipeline:
\begin{enumerate}
    \item \textbf{Diffusion Step:} Multi-scale spatial depthwise convolution $S$ captures neighbor context: $\tilde{X} = S(X_l)$.
    \item \textbf{Reaction Step:} The closed-form Minkowski flow operator $\Psi_{\Delta t}$ acts locally within each Minkowski subspace: $X_{l+1} = \Psi_{\Delta t}(\tilde{X})$.
\end{enumerate}

\subsection{Continuous Dynamics and Orthogonal Decomposition}
\label{subsec:continuous_lorentzian_dynamics}

Let $u(t) = X(t) - c \in \mathcal{C}^+$ denote the relative state centered at a learned origin $c$. Within each reaction step, parameters $\Theta = (c, R, \beta, \Omega)$ are held frozen across the local integration interval. For any timelike state with Minkowski radius $r_L(u) = \sqrt{\langle u, u \rangle_\eta} > 0$, the ambient space factorizes into a direct sum of the normal ray and the tangent bundle to the level set $\mathcal{M}_{r_L} = \{v : \langle v, v \rangle_\eta = r_L(u)^2\}$:
\begin{equation}
T_u \mathbb{R}^{1,m} = \mathcal{N}_u \oplus \mathcal{T}_u,
\end{equation}
where $\mathcal{N}_u \coloneqq \operatorname{span}\{u\}$ and $\mathcal{T}_u \coloneqq \{ v : \langle u, v \rangle_\eta = 0 \} = T_u \mathbb{H}_{r_L}^m$.

We formulate the continuous reaction vector field as an orthogonal decomposition:
\begin{equation}
\frac{du}{dt} = F_{\text{local}}(u; \Theta) = \underbrace{F_{\perp}(u)}_{\text{Signed Normal Dissipation}} + \underbrace{F_{\parallel}(u)}_{\text{Tangential Isometry Flow}},
\label{eq:orthogonal_decomposition}
\end{equation}
where:
\begin{equation}
\left\{
\begin{aligned}
F_{\perp}(u) &= -\beta \ln\left(r_L(u)/R\right) u, \quad \beta > 0, \\[0.6180em]
F_{\parallel}(u) &= \Omega u, \quad \Omega \in \mathfrak{so}(1,m), \quad \Omega^\top \eta + \eta \Omega = 0.
\end{aligned}
\right.
\end{equation}

Specifically, the normal dissipative component $F_{\perp}(u) \in \mathcal{N}_u$ acts parallel to the radial position vector $u$, exerting a scale-invariant logarithmic restoring force that contracts transverse off-manifold perturbations normally toward the target hyperboloid $\mathbb{H}_R^m$ at an exponential rate (with noise robustness certified by the radial perturbation bound in Proposition~\ref{prop:radial_error_bound}). Concurrently, the tangential isometric component $F_{\parallel}(u) \in \mathcal{T}_u$ is governed by a velocity generator $\Omega \in \mathfrak{so}(1,m)$, acting strictly within the tangent bundle $T_u \mathbb{H}_{r_L}^m$ to satisfy $\langle u, F_{\parallel}(u) \rangle_\eta \equiv 0$, which executes coordinate transport inside the manifold (hierarchical depth traversal and semantic branch switching) while preserving the Minkowski pseudo-norm identically.

Along a continuous trajectory $u(t)$ with initial state $u_0 = u(0)$, we abbreviate $r_L(t) \coloneqq r_L(u(t))$ and $r_L(0) \coloneqq r_L(u_0)$.
Defining the logarithmic transverse error coordinate $z(t) \coloneqq \ln(r_L(u(t))/R)$, we obtain the following exact properties:

\begin{proposition}[Orthogonal Decoupling and Exact Solution]
Along trajectories of \eqref{eq:orthogonal_decomposition} with frozen parameters $\Theta$, tangential Lorentz flows make identically zero contribution to the Minkowski norm, leaving the radial dynamics governed purely by normal logarithmic dissipation:
\begin{equation}
\frac{d}{dt} \|u\|_\eta^2 = -2\beta \ln\left(\frac{r_L}{R}\right) \|u\|_\eta^2 \iff \dot{r}_L = -\beta r_L \ln\left(\frac{r_L}{R}\right).
\end{equation}
For any initial state $u_* \in \mathcal{C}^+$, the exact unique solution across horizon $t \ge 0$ is:
\begin{equation}
\Phi_t(u_*) = \left( \frac{R}{r_L(u_*)} \right)^{1 - e^{-\beta t}} e^{t\Omega} u_*,
\label{eq:exact_solution}
\end{equation}
which preserves $\mathcal{C}^+$ and contracts the logarithmic radial error identically: $z(t) = e^{-\beta t} z(0)$.
\end{proposition}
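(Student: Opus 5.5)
The plan is to compute the evolution of the quadratic form $Q(t) \coloneqq \langle u(t), u(t)\rangle_\eta$ directly from \eqref{eq:orthogonal_decomposition}, and then to check the claimed closed form \eqref{eq:exact_solution} by verification plus uniqueness.

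\textbf{Norm identity.} Differentiating,
\[
\dot Q = 2\langle u, F_\perp(u)\rangle_\eta + 2\langle u, \Omega u\rangle_\eta .
\]
The second term vanishes. Indeed, $\langle u,\Omega u\rangle_\eta = u^\top \eta\Omega u$ equals its own transpose $u^\top \Omega^\top \eta u$, so it equals $\tfrac12 u^\top(\eta\Omega + \Omega^\top\eta)u = 0$ by the defining relation of $\mathfrak{so}(1,m)$. The first term gives $-2\beta\ln(r_L/R)\,Q$, which is the stated identity. Since $Q=r_L^2$ and $r_L>0$ on $\mathcal{C}^+$, we have $\dot Q = 2r_L\dot r_L$; dividing yields $\dot r_L = -\beta r_L\ln(r_L/R)$. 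Setting $z=\ln(r_L/R)$ gives $\dot z = \dot r_L/r_L = -\beta z$, hence $z(t)=e^{-\beta t}z(0)$.

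\textbf{Closed form.} I will use the ansatz $u(t) = a(t)\,e^{t\Omega}u_*$ with a scalar $a(t)>0$ and $a(0)=1$.
\begin{itemize}
\item Because $\Omega$ commutes with $e^{t\Omega}$, we get $\dot u = (\dot a/a)\,u + \Omega u$. Matching with \eqref{eq:orthogonal_decomposition} requires $\dot a/a = -\beta\ln(r_L(u)/R)$.
\item From $\Omega\in\mathfrak{so}(1,m)$ one has $\tfrac{d}{ds}\bigl[(e^{s\Omega})^\top\eta e^{s\Omega}\bigr]=0$, so $e^{t\Omega}$ preserves $\langle\cdot,\cdot\rangle_\eta$. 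Hence $r_L(u(t)) = a(t)\,r_L(u_*)$.
\item Writing $w=\ln a$, the matching condition becomes the linear ODE $\dot w = -\beta\bigl(w + z(0)\bigr)$ with $w(0)=0$. Its solution is $w(t) = -z(0)\bigl(1-e^{-\beta t}\bigr)$, i.e.\ $a(t) = (R/r_L(u_*))^{1-e^{-\beta t}}$.
\end{itemize}
This reproduces \eqref{eq:exact_solution} and is consistent with the $z$-dynamics derived above.

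\textbf{Invariance of $\mathcal{C}^+$ and uniqueness.} I expect this step to need the most care, since the vector field involves $\ln r_L$, which is only defined on $\{Q>0\}$. The scalar factor $a(t)$ is positive, so it suffices to show that $e^{t\Omega}$ maps $\mathcal{C}^+$ into itself.
\begin{itemize}
\item $e^{t\Omega}$ preserves $Q$, so timelike vectors stay timelike.
\item $t\mapsto e^{t\Omega}u_*$ is continuous and remains in $\{Q>0\}$. That set has two components, $X_0>0$ and $X_0<0$, because $X_0=0$ forces $Q\le 0$. Hence $X_0$ cannot change sign, and the trajectory stays in $\mathcal{C}^+$.
\end{itemize}
For uniqueness, the field $F_{\text{local}}$ is smooth, hence locally Lipschitz, on the open set $\mathcal{C}^+$, so Picard--Lindel\"of gives local uniqueness. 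The explicit solution exists for all $t\ge0$ and stays in $\mathcal{C}^+$; in particular $r_L$ remains between $r_L(u_*)$ and $R$, so it is bounded away from $0$ and $\infty$. Any other solution must therefore coincide with it on its whole interval of existence, which gives global uniqueness on $t\ge 0$.
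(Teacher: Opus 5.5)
Your proof is correct and follows essentially the same route as the paper: the identity $u^\top\eta\Omega u=0$, from skew-adjointness, removes the tangential contribution, and the solution then splits into the scalar log-radius ODE times the Lorentz transport $e^{t\Omega}$. The only difference is that the paper normalizes $y=u/r_L$ and derives $\dot y=\Omega y$, so every solution is forced into the stated form by two linear ODEs and uniqueness comes for free, whereas your ansatz $u=a(t)e^{t\Omega}u_*$ only verifies a solution and so needs your separate Picard--Lindel\"of step; your sign-of-$X_0$ argument for the invariance of $\mathcal{C}^+$ also supplies a point the paper merely asserts.
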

\begin{proof}
Differentiating: $\frac{d}{dt} \langle u, u \rangle_\eta = 2 u^\top \eta (\Omega u - \beta \ln(r_L/R) u)$. By skew-adjointness of the Lorentz Lie algebra, $\Omega^\top \eta + \eta \Omega = 0$, implying $u^\top \eta \Omega u \equiv 0$. Letting $y = u/r_L \in \mathbb{H}_1^m$, we have $\dot{y} = \Omega y$, hence $y(t) = e^{t\Omega} y(0)$. Multiplying by $r_L(t) = R \cdot \left(r_L(u_0)/R\right)^{e^{-\beta t}}$ yields \eqref{eq:exact_solution}.
\end{proof}

\begin{proposition}
[Semicontraction under Natural Product Metric]
\label{prop:radial_error_bound}
On the forward timelike cone $\mathcal{C}^+$, define the product metric for states $u, v \in \mathcal{C}^+$:
\begin{equation}
d_*^2(u, v) \coloneqq \left( \ln\frac{r_L(u)}{r_L(v)} \right)^2 + d_{\mathbb{H}_1^m}^2\left( \frac{u}{r_L(u)}, \frac{v}{r_L(v)} \right).
\end{equation}
For two trajectories of the same conditional flow with shared, fixed parameters $R>0, \beta>0, \Omega$, the flow is non-expansive under $d_*$, with exponential contraction of the logarithmic radial component:
\begin{equation}
d_*^2(\Phi_t(u), \Phi_t(v)) = e^{-2\beta t} \left( \ln\frac{r_L(u)}{r_L(v)} \right)^2 + d_{\mathbb{H}_1^m}^2\left( \frac{u}{r_L(u)}, \frac{v}{r_L(v)} \right).
\end{equation}
This property formalizes the stability of the conditional dynamics under fixed parameters, while data-driven contextual modulation across distinct inputs accommodates global semantic expressivity.
\end{proposition}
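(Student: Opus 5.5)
The approach is to plug the exact solution \eqref{eq:exact_solution} from the preceding Proposition into the two components of $d_*$ and compute each one separately. The flow factors as a scalar radial rescaling times the Lorentz matrix $e^{t\Omega}$, so I would use this in two ways.

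\textbf{Radial and angular parts of $\Phi_t$.} First, $e^{t\Omega}\in\SO^+(1,m)$, because $\Omega\in\so(1,m)$ satisfies $\Omega^\top\eta+\eta\Omega=0$. Differentiating $(e^{t\Omega})^\top\eta e^{t\Omega}$ gives zero, so this product is identically $\eta$. Hence $\langle e^{t\Omega}x,e^{t\Omega}y\rangle_\eta=\langle x,y\rangle_\eta$. Moreover, $e^{t\Omega}$ maps $\mathcal{C}^+$ into itself by continuity in $t$, since the image stays timelike and $X_0$ cannot change sign. With this, the Minkowski radius of $\Phi_t(u)$ is
\[
r_L(\Phi_t(u))=\Bigl(\tfrac{R}{r_L(u)}\Bigr)^{1-e^{-\beta t}} r_L(u)=R^{1-e^{-\beta t}}\,r_L(u)^{e^{-\beta t}},
\]
which is just $z(t)=e^{-\beta t}z(0)$ rewritten. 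The normalized state is
\[
\frac{\Phi_t(u)}{r_L(\Phi_t(u))}=e^{t\Omega}\frac{u}{r_L(u)},
\]
because the positive scalar prefactor cancels under normalization.

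\textbf{Radial component.} Taking logarithms, $\ln r_L(\Phi_t(u))-\ln r_L(\Phi_t(v))=e^{-\beta t}\bigl(\ln r_L(u)-\ln r_L(v)\bigr)$. The $R$-dependent terms cancel because both trajectories share the same $R$ and $\beta$. Squaring gives the first term of the claimed identity.

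\textbf{Hyperbolic component.} The normalized states are transported by the same matrix $e^{t\Omega}$, which preserves $\langle\cdot,\cdot\rangle_\eta$ and maps $\mathbb{H}_1^m$ to itself. By the distance formula in the Intrinsic Geodesic Distance definition with $R=1$, $d_{\mathbb{H}_1^m}$ depends only on $\langle\cdot,\cdot\rangle_\eta$. Therefore $d_{\mathbb{H}_1^m}(e^{t\Omega}\hat u,e^{t\Omega}\hat v)=d_{\mathbb{H}_1^m}(\hat u,\hat v)$. Adding the two squared terms yields the stated equality. Since $e^{-2\beta t}\le 1$ for $t\ge 0$, $d_*(\Phi_t(u),\Phi_t(v))\le d_*(u,v)$, which is non-expansiveness.

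\textbf{Main obstacle.} The only delicate step is justifying that $e^{t\Omega}$ preserves the forward sheet $\mathcal{C}^+$ and not merely the quadratic form. Without this, the normalized states could leave $\mathbb{H}_1^m$ and the $\arccosh$ formula would not apply. I would handle this with the continuity argument above: $t\mapsto e^{t\Omega}x$ is a continuous path of timelike vectors starting in $\mathcal{C}^+$, and the two sheets are disconnected, so the path cannot cross to the other sheet. The remaining steps are routine algebra. Well-posedness, namely that both trajectories stay in $\mathcal{C}^+$, is already given by the previous Proposition.
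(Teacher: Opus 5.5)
Your proof is correct and follows exactly the route the paper intends: the paper gives no separate proof of this proposition and treats it as an immediate consequence of the exact solution \eqref{eq:exact_solution}, where the log-radius gap scales by $e^{-\beta t}$ and the normalized states are carried by the common Lorentz isometry $e^{t\Omega}$, so your computation supplies that omitted derivation. Your continuity argument that $e^{t\Omega}$ preserves the forward cone $\mathcal{C}^+$ also fills in a step that the paper only asserts in the preceding proposition.
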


\subsection{Cone Lifting and End-to-End Layer Definition}
Arbitrary neural activations can land outside the forward cone $\mathcal{C}^+$. To ensure numerical robustness, we define the non-linear \textbf{cone lifting map} $P_\varepsilon: \mathbb{R}^{1,m} \to \mathcal{C}^+$:
\begin{equation}
P_\varepsilon(v) \coloneqq \begin{pmatrix} \sqrt{\|\mathbf{v}\|_2^2 + \Softplus(v_0)^2 + \varepsilon} \\ \mathbf{v} \end{pmatrix}, \quad \varepsilon > 0.
\end{equation}
In exact arithmetic, $\|P_\varepsilon(v)\|_\eta^2 \equiv \Softplus(v_0)^2 + \varepsilon > 0$, guaranteeing a timelike state. The complete network layer evaluates as cone lifting followed by the exact conditional flow:
\begin{equation}
\Psi_{\Delta t}(X) = c + \Phi_{\Delta t}\big(P_\varepsilon(X - c)\big).
\end{equation}

\subsection{Primary Workhorse: MAN-2D Architecture}
\label{subsec:man_2d}
In \textbf{MAN-2D}, the latent space is partitioned into $K = D/2$ independent two-dimensional Minkowski phase planes $\bigoplus_{k=1}^K \mathbb{R}^{1,1}$, with state $X = (X_0, X_1)^\top \in \mathbb{R}^{1,1}$ and $\eta = \diag(1, -1)$.

In 2D spacetime, spatial rotations do not exist ($\SO(1) \cong \{1\}$). The restricted Lorentz group $\SO^+(1,1)$ consists purely of 1D hyperbolic boosts parameterized by rapidity $\varphi \in \mathbb{R}$. With generator $\mathbf{K}_1 = \begin{pmatrix} 0 & 1 \\ 1 & 0 \end{pmatrix}$, setting $\Omega = \frac{\varphi}{\Delta t}\mathbf{K}_1$ yields \footnote{Note on convention: In contrast to the passive coordinate-transformation convention commonly seen in relativity textbooks (which carries a negative sign on $\sinh\varphi$), we adopt the active transformation convention standard in Lie group dynamical flows, where $\exp(\varphi \mathbf{K})$ actively advances the latent state forward along the positive spatial direction.}:
\begin{equation}
\begin{pmatrix} u_0' \\ u_1' \end{pmatrix} = \begin{pmatrix} \cosh\varphi & \sinh\varphi \\ \sinh\varphi & \cosh\varphi \end{pmatrix} \begin{pmatrix} u_0 \\ u_1 \end{pmatrix}.
\label{eq:2d_boost}
\end{equation}

The complete MAN-2D mapping evaluates as:
\begin{equation}
\Psi_{\Delta t}^{\text{2D}}(X) = c + \rho(\Delta t) \cdot \begin{pmatrix} u_0 \cosh\varphi + u_1 \sinh\varphi \\ u_0 \sinh\varphi + u_1 \cosh\varphi \end{pmatrix},
\end{equation}
where $u = P_\varepsilon(X-c)$, and $\rho(\Delta t) = (R / r_L(u))^{1 - e^{-\beta \Delta t}}$.

\paragraph{Geometric Nature and Granularity.}
MAN-2D does not obtain an intrinsic negative-curvature advantage: each individual factor $\mathbb{H}_R^1$ is isometric to the real line $(\mathbb{R}, R^2 d\phi^2)$, and their product $(\mathbb{H}^1)^K$ is intrinsically flat Euclidean space $\mathbb{R}^K$. However, extrinsically, $\SO^+(1,1)$ provides non-compact coordinates that avoid the periodic phase-locking of compact tori ($\mathbb{S}^1$). Furthermore, because channels factor into $K = D/2$ independent blocks, MAN-2D deploys \textbf{twice as many independent adaptive filters} as 4D ($D/4$ blocks), providing maximal flexibility for dense, low-resolution visual tokens.

\subsection{Spacetime Generalization: MAN-4D Architecture}
\label{subsec:man_4d}
In \textbf{MAN-4D}, channels are partitioned into $M = D/4$ four-dimensional Minkowski blocks $\bigoplus_{m=1}^M \mathbb{R}^{1,3}$ targeting $\mathbb{H}^3$, with state $X = (X_0, X_1, X_2, X_3)^\top$ and $\eta = \diag(1, -1, -1, -1)$.

\paragraph{Full 6-DOF Lorentz Dynamics vs. Computational Trade-Offs.}
The Lie algebra of the restricted Lorentz group $\so(1,3)$ is 6-dimensional, spanned by three spatial rotation generators $\mathbf{J} = (J_{23}, J_{31}, J_{12})$ and three boost generators $\mathbf{K} = (K_1, K_2, K_3)$. 
In principle, a full 6-degree-of-freedom (6-DOF) analytical flow can be evaluated in closed form via the polar decomposition of the Lorentz group:
\begin{equation}
\Lambda(\boldsymbol{\varphi}, \boldsymbol{\vartheta}) = B(\boldsymbol{\varphi}) \begin{pmatrix} 1 & \mathbf{0}^\top \\ \mathbf{0} & R(\boldsymbol{\vartheta}) \end{pmatrix} \in \mathrm{SO}^+(1,3),
\label{eq:full_polar_decomp}
\end{equation}
where $R(\boldsymbol{\vartheta}) \in \mathrm{SO}(3)$ evaluates 3D spatial rotations via Rodrigues' axis-angle formula, and $B(\boldsymbol{\varphi})$ evaluates an unconstrained 3D hyperbolic boost along direction $\mathbf{n} = \boldsymbol{\varphi}/\|\boldsymbol{\varphi}\|$ (detailed in Appendix~\ref{subsec:full_6d_lorentz}). 

However, deploying the unconstrained 6-DOF formulation across high-dimensional feature backbones introduces significant practical trade-offs:
\begin{enumerate}
    \item \textbf{Context Parameter Overhead:} The dynamic context generator must predict 6 independent velocity channels per block instead of 2, widening the parameter projection layer.
    \item \textbf{GPU Memory-Bound Latency:} Evaluating 3D vector cross-products and coordinate projections per block increases on-chip register pressure and intermediate memory traffic.
\end{enumerate}

\paragraph{Commuting Cartan-Subalgebra Parameterization.}
To establish a high-throughput, parameter-compact architecture that balances longitudinal hierarchy with transverse branching, we restrict the velocity generator $\Omega$ to the \textbf{Cartan maximal abelian subalgebra} of $\so(1,3)$. This subalgebra is spanned by the longitudinal boost generator $\mathbf{K}_3$ (rapidity $\varphi$) and the transverse spatial rotation generator $\mathbf{J}_{12}$ (angle $\vartheta$):
\begin{equation}
\Omega = \frac{\varphi}{\Delta t} \mathbf{K}_3 + \frac{\vartheta}{\Delta t} \mathbf{J}_{12} = \frac{1}{\Delta t} \begin{pmatrix}
0 & 0 & 0 & \varphi \\
0 & 0 & -\vartheta & 0 \\
0 & \vartheta & 0 & 0 \\
\varphi & 0 & 0 & 0
\end{pmatrix}, \quad \Delta t > 0.
\end{equation}

Over the finite local reaction horizon $\Delta t$, integrating the continuous generator $\Omega$ evaluates via the matrix exponential, where the time step $\Delta t$ cancels identically:
\begin{equation}
\exp(\Delta t \cdot \Omega) = \exp\left( \Delta t \left[ \frac{\varphi}{\Delta t}\mathbf{K}_3 + \frac{\vartheta}{\Delta t}\mathbf{J}_{12} \right] \right) = \exp(\varphi \mathbf{K}_3 + \vartheta \mathbf{J}_{12}).
\label{eq:delta_t_cancellation}
\end{equation}

\begin{theorem}[Cartan Decoupling and Exact Isometry]
\label{thm:cartan}
The generators strictly commute: $[\mathbf{K}_3, \mathbf{J}_{12}] = 0$. For any boost rapidity $\varphi \in \mathbb{R}$ and rotation angle $\vartheta \in \mathbb{R}$, the joint state transformation $X' \coloneqq \exp(\varphi \mathbf{K}_3 + \vartheta \mathbf{J}_{12}) X$ evaluates explicitly as the uncoupled $4 \times 4$ Lorentz matrix:
\begin{equation}
\begin{pmatrix} X_0' \\ X_1' \\ X_2' \\ X_3' \end{pmatrix}
=
\begin{pmatrix}
\cosh\varphi & 0 & 0 & \sinh\varphi \\
0 & \cos\vartheta & -\sin\vartheta & 0 \\
0 & \sin\vartheta & \cos\vartheta & 0 \\
\sinh\varphi & 0 & 0 & \cosh\varphi
\end{pmatrix}
\begin{pmatrix} X_0 \\ X_1 \\ X_2 \\ X_3 \end{pmatrix},
\label{eq:cartan_4x4}
\end{equation}
which factorizes into two mutually independent 2D planar maps:
\begin{align}
\begin{pmatrix} X_0' \\ X_3' \end{pmatrix} &= \begin{pmatrix} \cosh\varphi & \sinh\varphi \\ \sinh\varphi & \cosh\varphi \end{pmatrix} \begin{pmatrix} X_0 \\ X_3 \end{pmatrix}, \label{eq:boost} \\
\begin{pmatrix} X_1' \\ X_2' \end{pmatrix} &= \begin{pmatrix} \cos\vartheta & -\sin\vartheta \\ \sin\vartheta & \cos\vartheta \end{pmatrix} \begin{pmatrix} X_1 \\ X_2 \end{pmatrix}. \label{eq:rot}
\end{align}
Furthermore, this mapping preserves the Lorentz quadratic form identically: $\|X'\|_\eta^2 \equiv \|X\|_\eta^2$.
\end{theorem}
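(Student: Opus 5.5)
The plan is to work directly with the explicit $4\times 4$ generator matrices read off from the displayed $\Omega$. Here $\mathbf{K}_3 = E_{03}+E_{30}$, where $E_{ij}$ denotes the elementary matrix with a one in entry $(i,j)$ and indices run over $0,\dots,3$. Similarly $\mathbf{J}_{12} = E_{21}-E_{12}$. Two facts do all the work. First, $\mathbf{K}_3$ acts only on the coordinate plane $\operatorname{span}\{e_0,e_3\}$ and annihilates $\operatorname{span}\{e_1,e_2\}$. Second, $\mathbf{J}_{12}$ acts only on $\operatorname{span}\{e_1,e_2\}$ and annihilates $\operatorname{span}\{e_0,e_3\}$.

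\textbf{Commutation.} Both products $\mathbf{K}_3\mathbf{J}_{12}$ and $\mathbf{J}_{12}\mathbf{K}_3$ vanish. This is because the image of each generator lies in the kernel of the other, and it can be checked with the rule $E_{ij}E_{kl}=\delta_{jk}E_{il}$, since the index sets $\{0,3\}$ and $\{1,2\}$ are disjoint. Hence $[\mathbf{K}_3,\mathbf{J}_{12}]=0$.

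\textbf{Exponential.} For commuting matrices, $\exp(\varphi\mathbf{K}_3+\vartheta\mathbf{J}_{12}) = \exp(\varphi\mathbf{K}_3)\exp(\vartheta\mathbf{J}_{12})$. Each factor is computed by splitting its power series into even and odd parts. For the boost, $\mathbf{K}_3^2 = P_{03}$, the diagonal projector onto $\operatorname{span}\{e_0,e_3\}$. Therefore
\[
\exp(\varphi\mathbf{K}_3) = I - P_{03} + \cosh\varphi\, P_{03} + \sinh\varphi\, \mathbf{K}_3 .
\]
For the rotation, $\mathbf{J}_{12}^2 = -P_{12}$, and the analogous computation gives
\[
\exp(\vartheta\mathbf{J}_{12}) = I - P_{12} + \cos\vartheta\, P_{12} + \sin\vartheta\, \mathbf{J}_{12}.
\]
Multiplying the two factors, every cross term vanishes because the two planes are orthogonal. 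This yields exactly the block matrix \eqref{eq:cartan_4x4}. Reading off the rows indexed by $\{0,3\}$ and by $\{1,2\}$ gives the two independent planar maps \eqref{eq:boost} and \eqref{eq:rot}. The $\{0,3\}$ block is the same $\SO^+(1,1)$ boost as in \eqref{eq:2d_boost}.

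\textbf{Isometry.} There are two routes, and I would give both as a cross-check.

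The algebraic route verifies $\Omega^\top\eta+\eta\Omega=0$ for each generator. For $\mathbf{K}_3$, the product $\eta\mathbf{K}_3$ is antisymmetric because the sign flip in $\eta$ on index $3$ but not on index $0$ turns the symmetric $\mathbf{K}_3$ into an antisymmetric matrix. For $\mathbf{J}_{12}$, $\eta$ acts as $-I$ on the relevant block, and $\mathbf{J}_{12}$ is antisymmetric there. Consequently $\Lambda = e^{A}$ satisfies $\Lambda^\top\eta\Lambda = \eta$. This is the same skew-adjointness argument used in the proof of the Orthogonal Decoupling proposition, where $\frac{d}{ds}\bigl(e^{sA}\bigr)^\top \eta\, e^{sA} = 0$.

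The direct route uses the factorization. One has $X_0'^2 - X_3'^2 = X_0^2 - X_3^2$ by $\cosh^2-\sinh^2=1$, and $X_1'^2 + X_2'^2 = X_1^2 + X_2^2$ by $\cos^2+\sin^2=1$. Subtracting gives $\|X'\|_\eta^2 = \|X\|_\eta^2$.

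There is no genuine obstacle here. The only step needing care is the exponential computation, specifically keeping track of the identity on the complementary plane so that the off-block entries come out zero. The projector decomposition above handles this.
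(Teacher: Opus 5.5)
Your proposal is correct and follows the paper's proof essentially step for step. It uses the same vanishing products $\mathbf{K}_3\mathbf{J}_{12}=\mathbf{J}_{12}\mathbf{K}_3=0$, the same factorization of the exponential for commuting generators, and the same closed-form series: your projector forms are exactly the paper's $\mathbf{I}+\sinh\varphi\,\mathbf{K}_3+(\cosh\varphi-1)\mathbf{K}_3^2$ and $\mathbf{I}+\sin\vartheta\,\mathbf{J}_{12}+(1-\cos\vartheta)\mathbf{J}_{12}^2$, since $\mathbf{K}_3^2=P_{03}$ and $\mathbf{J}_{12}^2=-P_{12}$. Your direct $\cosh^2-\sinh^2$ and $\cos^2+\sin^2$ norm check matches the paper's, while the extra skew-adjointness route via $\Omega^\top\eta+\eta\Omega=0$ is a valid but optional cross-check the paper does not use here.
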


\begin{proof}
Direct matrix multiplication confirms that $\mathbf{K}_3 \mathbf{J}_{12} = \mathbf{J}_{12} \mathbf{K}_3 = \mathbf{0}$, hence $[\mathbf{K}_3, \mathbf{J}_{12}] = 0$. By the Baker--Campbell--Hausdorff formula, the matrix exponential factors without series commutator terms:
\begin{equation*}
\exp(\varphi \mathbf{K}_3 + \vartheta \mathbf{J}_{12}) = \exp(\varphi \mathbf{K}_3) \exp(\vartheta \mathbf{J}_{12}).
\end{equation*}
Expanding $\exp(\varphi \mathbf{K}_3) = \mathbf{I} + \sinh\varphi \mathbf{K}_3 + (\cosh\varphi - 1) \mathbf{K}_3^2$ and $\exp(\vartheta \mathbf{J}_{12}) = \mathbf{I} + \sin\vartheta \mathbf{J}_{12} + (1 - \cos\vartheta) \mathbf{J}_{12}^2$ yields the explicit $4 \times 4$ block-diagonal matrix in \eqref{eq:cartan_4x4}, which separates into \eqref{eq:boost} and \eqref{eq:rot}. 

Finally, calculating the Minkowski norm of $X'$:
\begin{align*}
\|X'\|_\eta^2 &= (X_0')^2 - (X_1')^2 - (X_2')^2 - (X_3')^2 \\
&= \left[ (X_0')^2 - (X_3')^2 \right] - \left[ (X_1')^2 + (X_2')^2 \right] \\
&= (X_0^2 - X_3^2)(\cosh^2\varphi - \sinh^2\varphi) - (X_1^2 + X_2^2)(\cos^2\vartheta + \sin^2\vartheta) \\
&= (X_0^2 - X_3^2) - (X_1^2 + X_2^2) = \|X\|_\eta^2,
\end{align*}
establishing exact isometry.
\end{proof}

The full MAN-4D operator evaluates in a single pass:
\begin{equation}
\Psi_{\Delta t}^{\text{4D}}(X) = c + \rho(\Delta t) \cdot \begin{pmatrix}
u_0 \cosh\varphi + u_3 \sinh\varphi \\
u_1 \cos\vartheta - u_2 \sin\vartheta \\
u_1 \sin\vartheta + u_2 \cos\vartheta \\
u_0 \sinh\varphi + u_3 \cosh\varphi
\end{pmatrix}.
\end{equation}

\subsection{Intermediate Formulation: MAN-3D ($\mathbb{H}^2$)}
\label{subsec:man_3d}
In MAN-3D, state triplets evolve in $\mathbb{R}^{1,2} \to \mathbb{H}^2$. Because rotations and boosts do not commute in $\so(1,2)$ ($[J_{12}, K_1] = K_2 \ne 0$), evaluating generic $\Omega = b_1 K_1 + b_2 K_2 + \omega J_{12}$ can be performed via Cayley--Hamilton expansion (Appendix \ref{subsec:so12_expansion}). 
While MAN-3D provides true negative curvature ($K < 0$), its channel divisor ($3$) breaks power-of-two alignment on standard Tensor Core architectures, causing memory slicing overhead in un-fused implementations.

\section{Theoretical Analysis and Systemic Trade-Offs}

\subsection{Elliptic and Hyperbolic Transport: A Symplectic Perspective}
\label{subsec:symplectic_perspective}

MAN-2D parameterizes exact Lorentz boosts by rapidity $\varphi$.
The relationship between the compact circular transport in CTAN
\cite{ananke2026} and the noncompact hyperbolic transport in MAN
can be understood through planar symplectic geometry.
This connection concerns their frozen-parameter conservative
components, rather than their complete dissipative network updates.

\paragraph{Classification within
$\mathrm{Sp}(2,\mathbb{R})=\mathrm{SL}(2,\mathbb{R})$.}
In two dimensions, the real symplectic group coincides with the
special linear group, comprising all linear transformations that
preserve the phase-space area form $dq\wedge dp$.
For $M\in\mathrm{SL}(2,\mathbb{R})$, the characteristic equation is
\begin{equation}
\lambda^2-\operatorname{tr}(M)\lambda+1=0.
\end{equation}
Excluding the central elements $\pm\mathbf{I}$, group elements
are classified into three types:
\begin{itemize}
    \item \textbf{Elliptic elements
    ($|\operatorname{tr}(M)|<2$):}
    Possess complex conjugate eigenvalues on the unit circle
    and are conjugate to Euclidean rotations in $\mathrm{SO}(2)$.

    \item \textbf{Hyperbolic elements
    ($|\operatorname{tr}(M)|>2$):}
    Possess distinct real reciprocal eigenvalues.
    When $\operatorname{tr}(M)>2$, the eigenvalues are
    $e^{\pm\chi}$ with $\chi>0$, and the matrix is conjugate
    to a Lorentz boost in $\mathrm{SO}^+(1,1)$.
    When $\operatorname{tr}(M)<-2$, the eigenvalues are
    $-e^{\pm\chi}$, and the matrix is conjugate to the
    negative of such a boost.
    Nontrivial MAN-2D boosts belong to the positive-trace case.

    \item \textbf{Parabolic elements
    ($|\operatorname{tr}(M)|=2$, $M\ne\pm\mathbf{I}$):}
    Are conjugate to a nontrivial shear or its negative,
    corresponding to trace $2$ or $-2$, respectively.
\end{itemize}

To make the Hamiltonian connection explicit, adopt the convention
\begin{equation}
z=(q,p)^\top,\qquad
\dot z=\mathbf{J}\nabla H(z),\qquad
\mathbf{J}=
\begin{pmatrix}
0&-1\\
1&0
\end{pmatrix}.
\end{equation}
The harmonic Hamiltonian
$H_{\mathrm{ell}}=\frac{1}{2}(q^2+p^2)$ generates rotations,
whereas the saddle Hamiltonian
$H_{\mathrm{hyp}}=\frac{1}{2}(q^2-p^2)$ generates Lorentz boosts:
\begin{equation}
\dot z=\mathbf{J}z
\quad\text{and}\quad
\dot z=\mathbf{K}z,\qquad
\mathbf{K}=
\begin{pmatrix}
0&1\\
1&0
\end{pmatrix},
\end{equation}
respectively.
Their nonzero energy level sets are circles and hyperbolas.
The corresponding exact flows provide the conservative
transport components underlying the circular CTAN construction
and MAN-2D.

\paragraph{Dual-Shear Variants and Their Invariants.}
CTAN \cite{ananke2026} includes a dual-shear variant as an
alternative to exact planar rotation:
\begin{equation}
M_E(s)=
\begin{pmatrix}
1-s^2&-s\\
s&1
\end{pmatrix},
\qquad
\det M_E(s)=1,
\qquad
\operatorname{tr}(M_E(s))=2-s^2.
\end{equation}
For a fixed parameter satisfying $0<|s|<2$, this matrix is
elliptic and preserves the positive-definite quadratic form
\begin{equation}
E_s(u)=u_0^2+s u_0u_1+u_1^2.
\end{equation}
Its invariant curves are therefore generally ellipses rather
than circles. In particular, the dual-shear map is not an exact
Euclidean rotation.

Changing the sign of the second shear yields the hyperbolic
dual-shear update
\begin{align}
u_{1,\mathrm{mid}} &= u_1+s u_0,\\
u_0' &= u_0+s u_{1,\mathrm{mid}},\\
u_1' &= u_{1,\mathrm{mid}},
\end{align}
with transfer matrix
\begin{equation}
M_H(s)=
\begin{pmatrix}
1+s^2&s\\
s&1
\end{pmatrix},
\qquad
\det M_H(s)=1,
\qquad
\operatorname{tr}(M_H(s))=2+s^2.
\end{equation}
For $s\ne0$, this matrix is hyperbolic, with eigenvalues
\begin{equation}
\lambda_\pm=e^{\pm\chi(s)},
\qquad
\chi(s)=2\operatorname{arsinh}\left(\frac{|s|}{2}\right).
\end{equation}
Here $\chi(s)$ describes spectral expansion and contraction;
it is not a boost rapidity with respect to the original
Minkowski metric.

\begin{remark}[Metric Invariance versus Area Preservation]
For a fixed $s$, the dual-shear matrix $M_H(s)$ is symplectic,
but it is not an equivalent implementation of an exact Lorentz
boost. Define
\begin{equation}
q_L(u)=u_0^2-u_1^2,
\qquad
\eta=\operatorname{diag}(1,-1).
\end{equation}
Direct calculation gives
\begin{equation}
q_L(M_H(s)u)-q_L(u)
=
s^2\left[u_0^2+(u_1+s u_0)^2\right]>0
\end{equation}
whenever $s\ne0$ and $u\ne0$.
Thus, $M_H(s)$ does not preserve $\eta$.
Instead, it preserves the modified quadratic form
\begin{equation}
I_s(u)=u_0^2-s u_0u_1-u_1^2,
\end{equation}
since
\begin{equation}
M_H(s)^\top Q_s M_H(s)=Q_s,
\qquad
Q_s=
\begin{pmatrix}
1&-s/2\\
-s/2&-1
\end{pmatrix}.
\end{equation}
Because $\det Q_s=-1-s^2/4<0$, the nonzero level sets of
$I_s$ are hyperbolas.
This invariant depends on $s$ and differs from the Minkowski
quadratic form used in MAN.
\end{remark}

\paragraph{An Algebraic Parameterization of Exact Lorentz Boosts.}
To avoid explicit evaluation of $\cosh\varphi$ and $\sinh\varphi$
while retaining the Lorentz constraint, one may instead
parameterize the boost as
\begin{equation}
B(a)=
\begin{pmatrix}
\sqrt{1+a^2}&a\\
a&\sqrt{1+a^2}
\end{pmatrix},
\qquad a\in\mathbb{R}.
\end{equation}
In exact arithmetic,
\begin{equation}
B(a)^\top\eta B(a)=\eta,\qquad
\det B(a)=1,\qquad
B_{00}(a)>0,
\end{equation}
so $B(a)\in\mathrm{SO}^+(1,1)$ for every real $a$.
It is exactly the Lorentz boost with rapidity
$\varphi=\operatorname{arsinh}(a)$, although evaluating $B(a)$
does not require computing this inverse function.
When $a$ is predicted directly, the matrix requires only
elementary arithmetic and one square root.

Combined with the original radial relaxation, this
parameterization retains the exact log-radius contraction
and invariant-hyperboloid properties of the frozen-parameter
conditional flow.
It changes the boost parameterization, not its geometry.

\begin{figure}[ht]
    \centering
    \includegraphics[width=0.92\linewidth]
    {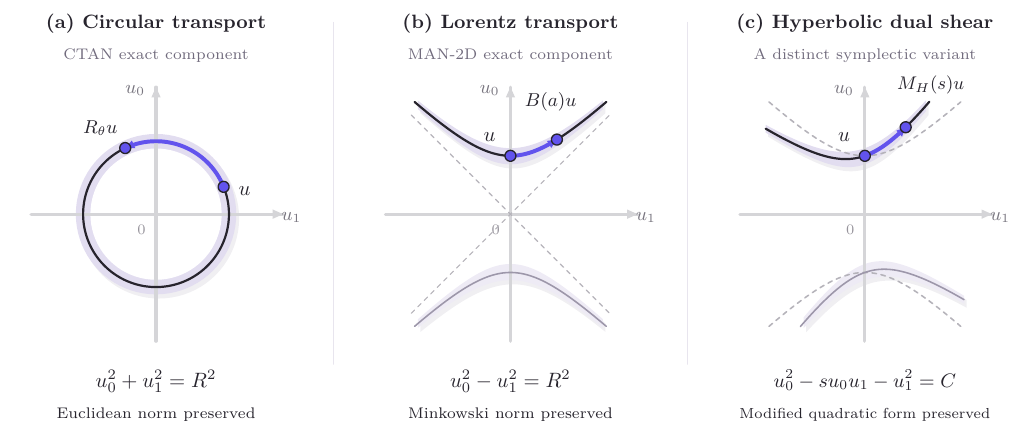}
    \caption{
    Elliptic and hyperbolic transport with frozen parameters.
    (a) Exact circular transport preserves the Euclidean norm.
    (b) Exact Lorentz transport, including the algebraic
    parameterization $B(a)$, preserves the Minkowski norm.
    (c) The hyperbolic dual-shear map preserves the modified
    quadratic form $I_s(u)=u_0^2-su_0u_1-u_1^2$, rather than
    the original Minkowski norm; gray dashed curves show
    the original Minkowski level set through the initial point.
    Panel (c) uses $s=0.7$, and the highlighted endpoints satisfy
    $u'=M_H(s)u$. The connecting arc indicates the preserved
    quadratic level set, not the intermediate path of the
    two shear substeps.
    Shaded bands indicate neighboring levels.
    Only the conservative components are illustrated;
    radial dissipation and input-dependent conditioning are excluded.
    }
    \label{fig:symplectic_perspective}
\end{figure}

Figure~\ref{fig:symplectic_perspective} illustrates the distinction between symplecticity and metric preservation.
For fixed parameters, exact circular transport, exact Lorentz transport, and the hyperbolic dual-shear map all preserve
phase-space area, but they preserve different quadratic forms. Circular transport follows Euclidean norm contours, whereas
Lorentz transport follows Minkowski norm contours. The algebraic boost $B(a)$ retains the latter geometry and
therefore remains compatible with the exact radial relaxation derived above.
By contrast, the dual-shear map follows the level sets of a parameter-dependent quadratic form $I_s$ and generally changes
the Minkowski radius. Thus, the common symplectic structure explains the algebraic
relationship between these conservative maps.

\subsection{Comparison of Geometric Scaffolds}
Table \ref{tab:comparison_theory} compares the structural and computational properties of the geometric scaffolds across channel capacity $D$.

\begin{table}[h]
\centering
\small
\caption{Systemic comparison of geometric scaffolds across channel capacity $D$ (per-factor geometry).}
\label{tab:comparison_theory}
\begin{tabular}{lcccc}
\toprule
\textbf{Metric / Property} & \textbf{2D Torus} & \textbf{MAN-2D} & \textbf{MAN-3D} & \textbf{MAN-4D} \\
\midrule
Ambient Space & $\mathbb{R}^2$ & $\mathbb{R}^{1,1}$ & $\mathbb{R}^{1,2}$ & $\mathbb{R}^{1,3}$ \\
Attractor & $\mathbb{S}^1$ & $\mathbb{H}^1$ & $\mathbb{H}^2$ & $\mathbb{H}^3$ \\
Factor Curvature $K$ & Flat (1D) & Flat (1D) & $-1/R^2$ & $\mathbf{-1/R^2}$ \\
Factor Ball Volume & $\min(2r, 2\pi R)$ & $2r$ & $\sim e^{r/R}$ & $\mathbf{\sim e^{2r/R}}$ \\
Spatial Rotation & $\SO(2)$ & None & $\SO(2)$ & $\mathbf{\SO(2)}$ \\
Channel Divisor & $2$ & $\mathbf{2}$ & $3$ & $\mathbf{4}$ \\
Standard Channel Fit & Yes & \textbf{Yes} & No & \textbf{Yes} \\
Blocks Count $M$ & $D/2$ & $\mathbf{D/2}$ & $D/3$ & $\mathbf{D/4}$ \\
Control Params / Block & $4$ & $4$ & $7$ & $7$ \\
Context Output Width & $2.00 D$ & $2.00 D$ & $2.33 D$ & $\mathbf{1.75 D}$ \\
Throughput / Speed & Fastest & Fast & Slowest & Fast \\
\bottomrule
\end{tabular}
\end{table}

\paragraph{Filter Granularity vs. Manifold Curvature.}
Table \ref{tab:comparison_theory} highlights the structural trade-off governing MAN architectures:
\begin{itemize}
    \item \textbf{MAN-2D (Granularity Optimum):} Extrinsically, $\SO^+(1,1)$ provides non-compact coordinate expansion, avoiding periodic phase-locking. Partitioning channels into $D/2$ blocks yields maximum independent filtering granularity, which proves optimal for fine-grained feature extraction on low-resolution image benchmarks.
    \item \textbf{MAN-4D (Symmetry and Compactness Optimum):} MAN-4D unlocks true per-factor negative curvature ($K < 0$, volume scaling $e^{2r/R}$) while saving parameter overhead ($1.75D$ vs. $2.0D$). It simultaneously models hierarchical depth progression and continuous transverse category rotation ($\mathbb{S}^2$ horizon), maintaining power-of-two Tensor Core alignment.
\end{itemize}

\paragraph{Parameter Accounting.} In all variants, $\beta$ is a learned per-block parameter. The context generator dynamically outputs 4, 7, and 7 scalars per block for 2D, 3D, and 4D models respectively (predicting centers $c$, radius offsets $\Delta \ln R$, and tangential velocities $\varphi, \vartheta$). Because 4D groups channels into $D/4$ blocks, its context projection layer is narrower ($1.75D$) than 2D ($2.0D$).


\section{Experiments}

\subsection{Experimental Setup}
We evaluate MAN on CIFAR-100 \cite{krizhevsky2009learning} (50,000 training, 10,000 testing images across 100 fine-grained categories at $32 \times 32$ resolution). All models are trained completely from scratch without external pretraining under standardized settings: AdamW optimizer, cosine annealing schedule, and 200 epochs.

\paragraph{Model Tier Configurations.}
We instantiate our architectures across two scalable pyramidal depth tiers. 
The compact tier (\textbf{Tier-1}, $\sim 1\text{M}$) adopts stage depths $(3, 4, 5)$ and channel dimensions $(96, 128, 160)$; under this budget, MAN-2D-1 comprises $\sim 1.02\text{M}$ parameters, whereas MAN-4D-1 comprises $\sim 0.97\text{M}$ parameters, saving $\sim 56\text{K}$ parameters due to its narrower context generator output width ($1.75D$ vs. $2.0D$). To evaluate architectural scaling under deeper cascades, the scaled tier (\textbf{Tier-2}, $\sim 2.1\text{M}$) expands stage depths to $(3, 6, 9)$ and channel dimensions to $(96, 144, 192)$, yielding MAN-2D-2 with $\sim 2.13\text{M}$ parameters and MAN-4D-2 with $\sim 2.01\text{M}$ parameters. 
Additionally, to isolate the impact of odd-dimensional manifolds, we implement the intermediate \textbf{MAN-3D} baseline under depths $(3, 4, 5)$ and channel dimensions $(96, 144, 192)$, totaling $\sim 1.42\text{M}$ parameters.

\subsection{Main Benchmark Results}
Table \ref{tab:cifar100_main} summarizes top-1 accuracy and parameter efficiency on CIFAR-100.

\begin{table}[h]
\centering
\small
\caption{Top-1 classification accuracy on CIFAR-100 without pretraining. Best overall results in bold; best sub-1M results underlined.}
\label{tab:cifar100_main}
\begin{tabular}{lcc}
\toprule
\textbf{Architecture} & \textbf{Parameters} & \textbf{Top-1 Accuracy} \\
\midrule
MobileNetV2 \cite{sandler2018mobilenetv2} & 2.30M & 70.90\% \\
ShuffleNetV2 1.5$\times$ \cite{ma2018shufflenet} & 2.60M & 75.95\% \\
ResNet-18 \cite{he2016deep} & 11.20M & 76.75\% \\
ResNet-50 \cite{he2016deep} & 23.70M & 79.14\% \\
DenseNet-121 \cite{huang2017densely} & 7.00M & 78.50\% \\
CliffordNet-1 \cite{ji2026cliffordnet} & 1.40M & 77.82\% \\
CliffordNet-2 \cite{ji2026cliffordnet} & 2.60M & 79.05\% \\
\midrule
\multicolumn{3}{l}{\textit{Flat Product-Torus Baselines (CTAN)}} \\
CTAN-Hier-1 \cite{ananke2026} & 0.91M & 78.41\% \\
CTAN-Hier-2 \cite{ananke2026} & 1.89M & 79.88\% \\
CTAN-Hier-3 \cite{ananke2026} & 2.13M & 80.32\% \\
\midrule
\multicolumn{3}{l}{\textit{Minkowski Attractor Networks (Ours)}} \\
\textbf{MAN-4D-1} (Spacetime Cartan) & \textbf{0.97M} & \underline{80.80\%} \\
\textbf{MAN-2D-1} (Granular Workhorse) & \textbf{1.02M} & \textbf{81.03\%} \\
MAN-3D (Intermediate $\mathbb{H}^2$) & 1.42M & 81.01\% \\
\textbf{MAN-4D-2} (Scaled Cartan) & \textbf{2.01M} & 81.75\% \\
\textbf{MAN-2D-2} (Scaled Granular) & \textbf{2.13M} & \textbf{81.82\%} \\
\bottomrule
\end{tabular}
\end{table}

\begin{enumerate}
    \item \textbf{The Minkowski Leap over Flat Tori:} Across both scale tiers, transitioning from flat tori to Minkowski spaces produces a consistent gain from $\sim 78.41\%$ to $\sim 81.03\%$ in Tier-1 and from $80.32\%$ to $81.82\%$ in Tier-2. This empirically confirms that non-compact Minkowski dissipation provides a superior inductive bias compared to compact tori.
    \item \textbf{Effectiveness of MAN-2D:} MAN-2D-1 attains the highest accuracy in Tier-1 (\textbf{81.03\%}) with the highest training throughput, confirming that maximizing channel filter granularity ($D/2$ blocks) is highly effective for visual feature extraction. Scaling to Tier-2, MAN-2D-2 reaches \textbf{81.82\%}, outperforming 23.7M ResNet-50 by \textbf{+2.68\%} with $11\times$ fewer parameters.
    \item \textbf{Parameter Efficiency of MAN-4D:} MAN-4D-1 attains \textbf{80.80\%} with only 0.97M parameters (saving $\sim 56\text{K}$ parameters over MAN-2D-1 due to $1.75D$ context overhead). In Tier-2, MAN-4D-2 scales to \textbf{81.75\%} at 2.01M parameters, demonstrating the parameter compactness of the Cartan spacetime formulation.
\end{enumerate}

\subsection{Dimensional Scaling}
\label{subsec:ablation_dims}

To isolate the representational and computational impact of the ambient manifold dimensionality, Table~\ref{tab:ablation_dims} systematically benchmarks the four geometric scaffolds under matched Tier-1 network depths $(3, 4, 5)$.

\begin{table}[h]
\centering
\small
\caption{Systematic dimensional ablation on CIFAR-100 across geometric scaffolds under matched Tier-1 depths $(3, 4, 5)$.}
\label{tab:ablation_dims}
\begin{tabular}{lcccc}
\toprule
\textbf{Model} & \textbf{Manifold} & \textbf{Params} & \textbf{Channels} & \textbf{Top-1} \\
\midrule
CTAN-2D & $\mathbb{S}^1$ & 0.91M & $(96, 128, 160)$ & 78.41\% \\
MAN-2D-1 & $\mathbb{H}^1$ & 1.02M & $(96, 128, 160)$ & \textbf{81.03\%} \\
MAN-3D & $\mathbb{H}^2$ & 1.42M & $(96, 144, 192)$ & 81.01\% \\
MAN-4D-1 & $\mathbb{H}^3$ & \textbf{0.97M} & $(96, 128, 160)$ & \underline{80.80\%} \\
\bottomrule
\end{tabular}
\end{table}

Several profound architectural insights emerge from this cross-manifold comparison:

\paragraph{The Categorical Minkowski Leap.} 
Most prominently, transitioning from the flat compact torus ($\mathbb{S}^1$, 78.41\%) to any of the Minkowski hyperbolic formulations ($\mathbb{H}^1, \mathbb{H}^2, \mathbb{H}^3$) yields an immediate, categorical accuracy surge of $+2.39\%$ to $+2.68\%$, elevating performance from the 78\% tier to the 81\% plateau. 
This empirical leap confirms our foundational hypothesis: compact periodic orbits in flat tori suffer from phase-locking and bounded coordinate congestion under dense classification, whereas non-compact Minkowski flows provide unconstrained monotonic coordinate dilation alongside scale-invariant logarithmic dissipation, offering a fundamentally superior inductive prior for visual representation learning.

\paragraph{Granularity vs. Geometric Dimensionality (2D vs. 4D).} 
Comparing MAN-2D-1 and MAN-4D-1 under identical stage channel dimensions $(96, 128, 160)$ highlights the trade-off between coordinate factorization granularity and manifold dimensionality:
\begin{itemize}
    \item \textbf{MAN-2D-1 (81.03\%):} Factors channels into $K = D/2$ independent 2D Minkowski planes (deploying 80 independent centers and radii in Stage 3). This fine-grained channel decoupling acts as a dense bank of independent non-linear bandpass filters. On low-resolution visual tokens ($32 \times 32$), this localized flexibility yields the highest classification accuracy (\textbf{81.03\%}) with the fastest execution throughput, as its $2\times 2$ Lorentz boost requires minimal tensor slicing and zero spatial rotation overhead.
    \item \textbf{MAN-4D-1 (80.80\%):} Groups channels into $M = D/4$ blocks (40 blocks in Stage 3). Crucially, by virtue of the Cartan maximal decomposition, MAN-4D requires only $1.75D$ context generator output channels (versus $2.0D$ in 2D), shrinking total network capacity to merely \textbf{0.97M parameters} (saving $\sim 56\text{K}$ parameters). Despite operating with fewer independent blocks and lower parameter capacity, MAN-4D retains a competitive 80.80\%, demonstrating the representational strength of coupling longitudinal depth boosts with continuous transverse $\mathrm{SO}(2)$ category rotations.
\end{itemize}

\paragraph{The Cost of Odd-Dimensional Embeddings (MAN-3D).} 
MAN-3D ($\mathbb{H}^2$) validates the empirical power of intrinsic negative curvature ($K < 0$), achieving 81.01\%. However, because its channel divisor is 3, preserving representational balance necessitated expanding channels to non-standard widths $(96, 144, 192)$, which inflated parameters to 1.42M without surpassing the 1.02M MAN-2D-1 baseline. Furthermore, evaluating non-commuting generators and arbitrary-direction vector projections in 3D incurs intermediate memory traffic, resulting in the slowest training throughput. This confirms that while 3D hyperbolic geometry is theoretically sound, 2D and 4D Minkowski formulations represent the true Pareto-optimal architectures for modern hardware.


\section{Conclusion}
We introduced Minkowski Attractor Networks (MAN), an operator-splitting-inspired continuous-dynamical framework grounded in pseudo-Riemannian Minkowski spacetime. MAN combines cone lifting with conditional Lorentz transport and closed-form radial relaxation, providing a noncompact geometric inductive bias without iterative ODE integration. Our primary visual workhorse, MAN-2D, maximizes channel filter granularity to achieve state-of-the-art parameter efficiency, while our 4D extension, MAN-4D, evaluates exact closed-form spacetime isometries via commuting Cartan subalgebras. Reaching up to \textbf{81.82\%} accuracy on CIFAR-100 with only 2.13M parameters, MAN demonstrates that closed-form Minkowski spacetime dynamics provide a powerful, mathematically rigorous foundation for next-generation visual architectures.


\bibliographystyle{unsrt} 
\bibliography{man}  

\clearpage

\appendix
\section{Appendix}

\subsection{Closed-Form Formulation of Full 6-DOF Lorentz Dynamics}
\label{subsec:full_6d_lorentz}
For completeness, the unconstrained 6-DOF Lorentz transformation \eqref{eq:full_polar_decomp} evaluates analytically without series truncation:
\begin{align}
\mathbf{X}_{\text{rot}} &= \mathbf{X}\cos\vartheta + (\mathbf{k} \times \mathbf{X})\sin\vartheta + \mathbf{k}(\mathbf{k}\cdot\mathbf{X})(1 - \cos\vartheta), \\
X_0' &= X_0 \cosh\varphi + (\mathbf{X}_{\text{rot}} \cdot \mathbf{n})\sinh\varphi, \\
\mathbf{X}' &= \mathbf{X}_{\text{rot}} + \left[ X_0 \sinh\varphi + (\mathbf{X}_{\text{rot}} \cdot \mathbf{n})(\cosh\varphi - 1) \right] \mathbf{n},
\end{align}
where $\mathbf{k} = \boldsymbol{\vartheta}/\|\boldsymbol{\vartheta}\|$, $\mathbf{n} = \boldsymbol{\varphi}/\|\boldsymbol{\varphi}\|$, and $\vartheta = \|\boldsymbol{\vartheta}\|, \varphi = \|\boldsymbol{\varphi}\|$. While mathematically exact and metric-preserving, evaluating this operator across deep cascades introduces higher latency than the factorized Cartan flow.

\subsection{Cayley--Hamilton Expansion for $\so(1,2)$}
\label{subsec:so12_expansion}
For any $\Omega \in \so(1,2)$ with spatial rotation $\omega$ and boosts $(b_1, b_2)$, let $\kappa = b_1^2 + b_2^2 - \omega^2$. By the Cayley--Hamilton theorem, $\Omega^3 = \kappa \Omega$. The exact exponential can be evaluated without infinite series truncation as:
\begin{equation}
e^{t\Omega} = \mathbf{I}_3 + A_\kappa(t) \Omega + B_\kappa(t) \Omega^2,
\end{equation}
where $A_\kappa(t) = \frac{\sinh(t\sqrt{\kappa})}{\sqrt{\kappa}}$ for $\kappa > 0$, $A_0(t) = t$ for $\kappa = 0$, and $A_\kappa(t) = \frac{\sin(t\sqrt{-\kappa})}{\sqrt{-\kappa}}$ for $\kappa < 0$. To prevent numerical cancellation as $\kappa \to 0$, $B_\kappa(t)$ is stably evaluated via:
\begin{equation}
B_\kappa(t) = \begin{cases}
\frac{2\sinh^2(t\sqrt{\kappa}/2)}{\kappa}, & \kappa > 0, \\
\frac{t^2}{2}, & \kappa = 0, \\
\frac{2\sin^2(t\sqrt{-\kappa}/2)}{-\kappa}, & \kappa < 0.
\end{cases}
\end{equation}

\end{document}